\documentclass[runningheads]{llncs}

\usepackage{accv}

\usepackage{accvabbrv}

\usepackage{graphicx}
\usepackage{booktabs}
\usepackage{algorithm}
\usepackage{algorithmicx}
\usepackage{algpseudocode}
\usepackage{multicol}
\usepackage{pifont}
\newcommand{\cmark}{\ding{51}}
\newcommand{\xmark}{\ding{55}}
\usepackage{multirow}
\usepackage[accsupp]{axessibility}  

\usepackage[pagebackref,breaklinks,colorlinks,citecolor=accvblue]{hyperref}

\usepackage{orcidlink}

\begin{document}

\title{TaDA: Calibrated Probe Gating for \\ Task-Domain LoRA Merging} 

\titlerunning{Task-Domain LoRA Merging}

\author{Huy Quoc To\inst{1}\orcidlink{0000-0002-0936-245X} \and
Fuyi Li\inst{2}\orcidlink{0000-0001-5216-3213} \and
Guangyan Huang\inst{1}\orcidlink{0000-0002-1821-8644} \and
Ming Liu\inst{1}\orcidlink{0000-0002-2160-6111}}

\authorrunning{Huy To et al.}

\institute{Deakin University, Melbourne, VIC, Australia \and
Adelaide University, Adelaide, SA, Australia \\
\email{\{q.to, guangyan.huang, m.liu\}@deakin.edu.au, fuyi.li@adelaide.edu.au}}

\maketitle

\begin{abstract}
Combining a task LoRA adapter with a domain LoRA adapter into a
single unified model is a practical yet largely unexplored challenge.
Existing methods treat both adapters as symmetric peers, applying
uniform weights across all layers.
We argue that task and domain adapters exhibit a consistent
depth-dependent asymmetry across transformer architectures.
Domain dominance increases with layer depth, while shallower layers
retain stronger task-relevant signals.
Motivated by this observation, we propose \textbf{TaDA}
(\textbf{Ta}sk-\textbf{D}omain LoR\textbf{A} Merging), a
training-free algorithm that exploits this structure through
calibrated probe-guided per-layer gating and per-component
subspace-aware merging.
The gating assigns individual weights per layer and projection type
using a probe signal proved invariant to adapter weight magnitude.
The merging discards conflicting singular directions before
combining the remaining components.
\textbf{TaDA} produces a standard rank-$r$ LoRA adapter with zero inference overhead.
On six scientific QA benchmarks with Llama-2-7B, TaDA achieves an average
accuracy of 0.452, outperforming DARE-TIES by +3.6 percentage points and
obtaining the best result on all six benchmarks. On six image classification
benchmarks with ViT-L/16, TaDA reaches 85.9\% average accuracy, improving
over the strongest merging baseline while leading in three of the six individual
benchmarks.

\keywords{LoRA merging \and parameter-efficient fine-tuning \and vision transformers \and adapter composition}
\end{abstract}




\section{Introduction}
\label{sec:intro}

The ecosystem of Low-Rank Adaptation (LoRA) adapters~\cite{hu2022lora}
has grown rapidly: model hubs now host thousands of adapters, each
specializing in either a \emph{task} (\eg instruction following) or a
\emph{domain} (\eg biomedical text, satellite imagery).
Merging a task adapter and a domain adapter into a single unified adapter capable of answering domain-specific queries without retraining is a practical challenge. However, it remains largely unexplored. Existing merging methods~\cite{wortsman2022modelsoupsaveragingweights,
yadav2023tiesmerging,yu2024dare} combine adapter weights
uniformly at every layer, treating task and domain adapters as symmetric
peers. However, we discover an interesting asymmetry. The task and domain adapters activate differently across transformer depth, and exploiting this structure leads to substantially better merges.

\paragraph{A diagnostic finding.}
By passing $N{=}32$ domain-specific inputs through both adapters
and measuring their activation norms at each layer, we find a
consistent structural pattern across both language (Llama-2-7b)
and vision (ViT-L/16) transformers. The domain dominance increases with depth. Shallower layers retain more task-relevant signal, while deeper layers are more strongly domain-dominant.
However, measuring raw activation norms directly is unreliable. The norms grow with training duration rather than domain relevance.
We address this with \textbf{calibrated probe scoring}. The domain relevance at each layer is computed as the ratio of activations on domain-specific inputs to activations on general inputs. We prove that this ratio is invariant to adapter weight magnitude.
Uniform merging ignores this structure entirely. Assigning the same weight at every layer gives the domain adapter too much influence in shallower layers, degrading task format, and too little in deeper layers, diluting domain knowledge.

In this paper, we propose \textbf{TaDA} (\textbf{Ta}sk-\textbf{D}omain LoR\textbf{A}
Merging), a training-free merging algorithm with two novelties.
First, \textbf{calibrated probe-guided gating} computes a per-layer,
per-module task weight using the calibrated score, with a
module-type-aware threshold $\tau_m$ that keeps output projections
task-dominant to preserve answer format.
Second, \textbf{per-component subspace-aware merging} decomposes both
adapters via SVD, discards singular components whose directions conflict
in both input and output feature spaces, and merges the remainder with
individual per-component weights. The merged adapter is a standard rank-$r$ LoRA with
zero inference overhead.

On six scientific text QA benchmarks using the Llama-2-7b backbone, \textbf{TaDA} achieves an average accuracy of $0.452$, outperforming DARE-TIES by $+3.6$pp. It achieves the best results on all six benchmarks, including MMLU-CS. On six image classification benchmarks using ViT-L/16, TaDA achieves an average accuracy of $85.9\%$, outperforming all baselines including DARE-TIES ($85.6\%$), while leading on three of the six individual benchmarks.

\paragraph{Our contributions are:}
\begin{itemize}
  \item We propose a novel merging algorithm for LoRA-based adapters. TaDA is a training-free merging algorithm that exploits the depth-dependent activation asymmetry between task and domain adapters via calibrated probe-guided per-layer gating and per-component subspace-aware filtering. Across both language and vision settings, TaDA achieves the best average performance among merging methods while preserving the standard rank-$r$ LoRA form and adding no inference-time overhead.

  \item We provide a systematic task $\times$ domain merging benchmark spanning
    six scientific text QA benchmarks using Llama-2-7b and six image
    classification benchmarks using ViT-L/16. Our results demonstrate that
 existing methods designed for task-to-task merging are
    inadequate for this asymmetric setting.
\end{itemize}


\section{Related Work}
\label{sec:related}

LoRA~\cite{hu2022lora} injects trainable low-rank matrices into
frozen model weights, reducing trainable parameters by orders of
magnitude. Subsequent work has extended this to vision
transformers~\cite{dosovitskiy2021an}, video
recognition~\cite{yang2023aim}, and medical
imaging~\cite{chen2024medblip}. Our work operates on trained LoRA
adapters rather than training them, making \textbf{TaDA} orthogonal to
improvements in LoRA training.

Model soups~\cite{wortsman2022modelsoupsaveragingweights} show that
averaging fine-tuned weights can improve accuracy and robustness.
Task Arithmetic~\cite{ilharco2023editing} proposes adding and
subtracting task vectors in weight space.
TIES~\cite{yadav2023tiesmerging} addresses parameter interference by
retaining only the top-magnitude values and resolving sign conflicts.
DARE~\cite{yu2024dare} randomly drops adapter weights before
rescaling, reducing interference stochastically.
KnOTS~\cite{stoica2025knots} uses SVD to align LoRA adapter
subspaces before merging, improving upon naive averaging for
task-to-task merging.

\textbf{TaDA} differs from all of the above in three ways, summarized in
Table~\ref{tab:related}:
(i)~it targets the \emph{asymmetric} task $\times$ domain setting,
where the two adapters encode qualitatively different knowledge;
(ii)~it assigns \emph{probe-guided per-layer and per-component} weights rather than uniform or stochastic weights; and (iii)~it is fully deterministic, unlike DARE-based methods.

\begin{table}[t]
\centering
\caption{
  Comparison of LoRA merging methods.
  \cmark: supported. \xmark: not supported.
  ``Per-layer $\alpha$'': different weights per layer.
  ``Asymmetric'': task and domain adapters treated differently.
  ``Deterministic'': no random seed dependence.
}
\label{tab:related}
\setlength{\tabcolsep}{3pt}
\begin{tabular}{lcccc}
\toprule
Method & Per-layer $\alpha$ & Asymmetric & Subspace-aware & Deterministic \\
\midrule
Linear~\cite{wortsman2022modelsoupsaveragingweights}
  & \xmark & \xmark & \xmark & \cmark \\
TIES~\cite{yadav2023tiesmerging}
  & \xmark & \xmark & \xmark & \cmark \\
DARE~\cite{yu2024dare}
  & \xmark & \xmark & \xmark & \xmark \\
Task Arith.~\cite{ilharco2023editing}
  & \xmark & \xmark & \xmark & \cmark \\
\midrule
\textbf{\textbf{TaDA} (ours)}
  & \cmark & \cmark & \cmark (filter) & \cmark \\
\bottomrule
\end{tabular}
\end{table}

BiEfficient~\cite{he2024biefficient} proposes bidirectional prompting
of CLIP for video recognition using PEFT, demonstrating that
task-format and domain-knowledge signals can be effectively combined
in vision-language models.
MedBLIP~\cite{chen2024medblip} bootstraps a lightweight medical VLP
model by combining a frozen 2D vision encoder with a LoRA-tuned
language model on 3D medical images.
Both works use LoRA as a tool for efficient transfer. On the other hand, \textbf{TaDA} studies how independently trained task and domain
LoRAs can be combined \emph{post hoc} without retraining.


\section{Method}
\label{sec:method}

Given a task LoRA $\{\Delta W^T_\ell\}_{\ell=1}^L$ and a domain LoRA
$\{\Delta W^D_\ell\}_{\ell=1}^L$ sharing the same base model
$f^{(\text{base})}$ and rank $r$, \textbf{TaDA} produces a merged adapter
$\{\Delta W^m_\ell\}_{\ell=1}^L$ of rank $r$ without any additional
training. Figure~\ref{fig:overview} shows an overview.

\begin{figure}
    \centering
    \includegraphics[width=1\linewidth]{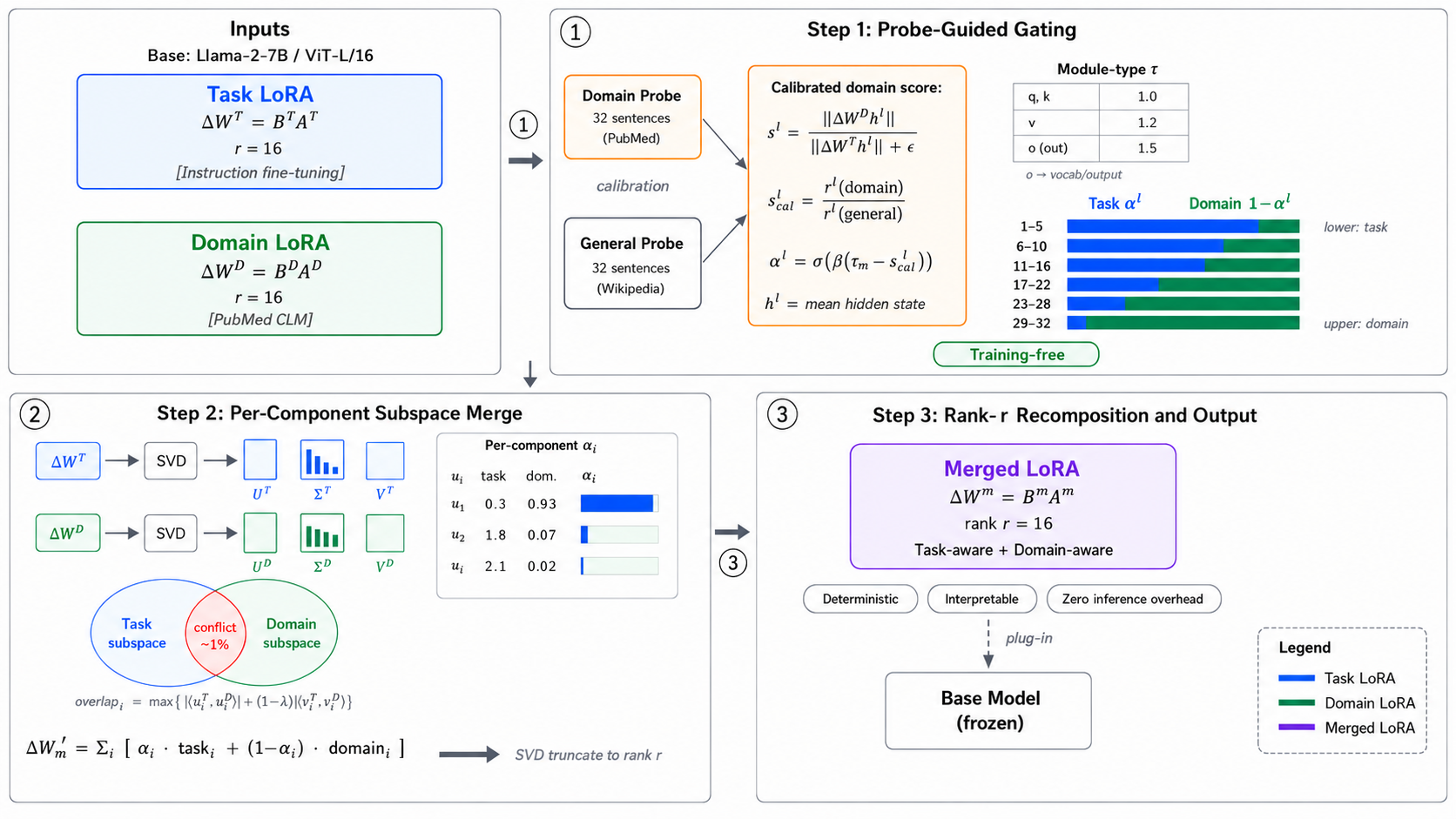}
    \caption{TaDA overview.}
    \label{fig:overview}
\end{figure}

\subsection{Calibrated Probe-Guided Per-Layer Gating}
\label{sec:probe}

\paragraph{Probe construction.}
We construct two probe sets of $N{=}32$ inputs each:
a \emph{domain probe} $\mathcal{P}_d$ (target-domain sentences or
images) and a \emph{general probe} $\mathcal{P}_g$ (Wikipedia text
or ImageNet images). Both are passed through the frozen base model
to extract mean hidden states $\mathbf{h}^{(\ell)}(\mathcal{P})
\in \mathbb{R}^{d_{\text{in}}}$ at each layer $\ell$.

\paragraph{Calibrated domain relevance score.}
The raw activation ratio at layer $\ell$ for probe $\mathcal{P}$:
\begin{equation}
  r^\ell(\mathcal{P}) =
  \frac{\|\Delta W^D_\ell\,\mathbf{h}^{(\ell)}(\mathcal{P})\|_2}
       {\|\Delta W^T_\ell\,\mathbf{h}^{(\ell)}(\mathcal{P})\|_2 + \varepsilon}.
  \label{eq:raw_ratio}
\end{equation}
Raw norms grow with training duration, biasing $r^\ell$ toward
whichever adapter trained longer regardless of layer function.
We remove this bias by calibrating against the general probe:
\begin{equation}
  s^\ell = \frac{r^\ell(\mathcal{P}_d)}{r^\ell(\mathcal{P}_g)}.
  \label{eq:calibrated}
\end{equation}
\begin{proposition}[Calibration invariance]
\label{prop:invariance}
$s^\ell$ is invariant to scalar rescaling of either adapter
($\Delta W^D \!\to\! \alpha\Delta W^D$ or
$\Delta W^T \!\to\! \beta\Delta W^T$, $\alpha,\beta > 0$).
\end{proposition}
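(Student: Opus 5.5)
The plan is a direct computation. The key observation is that the probe hidden states $\mathbf{h}^{(\ell)}(\mathcal{P}_d)$ and $\mathbf{h}^{(\ell)}(\mathcal{P}_g)$ come from the \emph{frozen base model}, as stated in the probe construction. They therefore do not depend on either adapter, and rescaling $\Delta W^D_\ell$ or $\Delta W^T_\ell$ leaves them unchanged. I will first state this explicitly, since it is what makes the argument layer-local: changing one layer's adapter does not propagate into the inputs of any layer's probe computation.

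Next I will substitute $\Delta W^D_\ell \to \alpha\Delta W^D_\ell$ and $\Delta W^T_\ell \to \beta\Delta W^T_\ell$ into \eqref{eq:raw_ratio} and use absolute homogeneity of the Euclidean norm, $\|\alpha \Delta W \mathbf{h}\|_2 = \alpha\|\Delta W\mathbf{h}\|_2$ for $\alpha>0$. Treating first the idealized case $\varepsilon=0$, this gives
\begin{equation*}
  r^\ell(\mathcal{P}) \;\to\; \frac{\alpha\,\|\Delta W^D_\ell\mathbf{h}^{(\ell)}(\mathcal{P})\|_2}{\beta\,\|\Delta W^T_\ell\mathbf{h}^{(\ell)}(\mathcal{P})\|_2} = \frac{\alpha}{\beta}\, r^\ell(\mathcal{P}).
\end{equation*}
The factor $\alpha/\beta$ is the same for $\mathcal{P}_d$ and $\mathcal{P}_g$, because it depends only on the rescaling and not on the probe. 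It therefore cancels in \eqref{eq:calibrated}, so $s^\ell$ is unchanged. Rescaling just one adapter is the special case $\beta=1$ or $\alpha=1$. I will also note the nondegeneracy requirement that $\Delta W^T_\ell\mathbf{h}^{(\ell)}(\mathcal{P})\neq 0$ and $r^\ell(\mathcal{P}_g)\neq 0$, so that every quotient is defined.

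The main obstacle is the stabilizer $\varepsilon>0$. With it, the denominator becomes $\beta\|\Delta W^T_\ell\mathbf{h}\|_2+\varepsilon$, which is not homogeneous, so exact invariance holds only at $\varepsilon=0$. I would handle this in two ways. First, state the proposition for $\varepsilon=0$, regarding $\varepsilon$ purely as a numerical safeguard. Second, add a perturbation remark. Writing $t_{\mathcal{P}}=\|\Delta W^T_\ell\mathbf{h}^{(\ell)}(\mathcal{P})\|_2$, the multiplicative change in each raw ratio is
\begin{equation*}
  \frac{\alpha}{\beta}\cdot\frac{t_{\mathcal{P}}+\varepsilon}{t_{\mathcal{P}}+\varepsilon/\beta}.
\end{equation*}
The $\alpha/\beta$ part cancels in $s^\ell$, so the residual deviation of $s^\ell$ is the ratio of these correction factors over $\mathcal{P}_d$ and $\mathcal{P}_g$. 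This residual is $1+O(\varepsilon/\min_{\mathcal{P}} t_{\mathcal{P}})$ and tends to $1$ as $\varepsilon\to 0$. That makes the invariance exact in the limit and approximate in practice, whenever $\varepsilon$ is negligible compared with the adapter activation norms.
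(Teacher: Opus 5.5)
Your proposal is correct and follows the paper's argument. By homogeneity of the norm, rescaling $\Delta W^D$ multiplies both $r^\ell(\mathcal{P}_d)$ and $r^\ell(\mathcal{P}_g)$ by $\alpha$, and rescaling $\Delta W^T$ multiplies both by $1/\beta$, so the common factor cancels in $s^\ell$. Your explicit treatment of $\varepsilon$ adds something the paper's proof omits: the paper's $1/\beta$ scaling claim holds exactly only at $\varepsilon=0$, whereas the $\Delta W^D$ rescaling is exactly invariant for every $\varepsilon>0$, as your correction factor shows at $\beta=1$.
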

\begin{proof}
Under $\Delta W^D \to \alpha\Delta W^D$, both $r^\ell(\mathcal{P}_d)$
and $r^\ell(\mathcal{P}_g)$ scale by $\alpha$, so their ratio
$s^\ell$ is unchanged.
Under $\Delta W^T \to \beta\Delta W^T$, both $r^\ell(\mathcal{P}_d)$
and $r^\ell(\mathcal{P}_g)$ scale by $1/\beta$, so their ratio
is again unchanged.
\end{proof}
When $s^\ell{>}1$, the domain adapter activates disproportionately
on domain inputs, identifying a domain-dominant layer.
$\tau{=}1.0$ is therefore the principled balanced default.

\paragraph{Module-type-aware gating.}
The output projection (\texttt{o\_proj}) sits immediately before the
residual addition and governs output format; it must stay
task-dominant. We use a module-type threshold $\tau_m$ and compute:
\begin{equation}
  \alpha^{(\ell,m)} = \sigma\!\bigl(\beta\cdot(\tau_m - s^\ell)\bigr),
  \label{eq:alpha}
\end{equation}
where $\beta{=}5.0$ and $\tau_m \in \{1.0,1.2,1.5\}$ for
\texttt{q,k}, \texttt{v}, and \texttt{o} projections respectively.

\subsection{Per-Component Subspace-Aware Merging}
\label{sec:subspace}

\paragraph{SVD decomposition.}
\begin{equation}
  \Delta W^T_\ell = U^T\Sigma^T{V^T}^\top,\quad
  \Delta W^D_\ell = U^D\Sigma^D{V^D}^\top.
  \label{eq:svd}
\end{equation}

\paragraph{Joint U+V overlap filtering.}
A singular component $i$ may conflict with the other adapter in
the input \emph{or} output feature space; we check both:
\begin{equation}
  \mathrm{overlap}_i =
  \max_j\!\bigl[
    \lambda|\mathbf{u}^T_i{}^\top\mathbf{u}^D_j|
    +(1{-}\lambda)|\mathbf{v}^T_i{}^\top\mathbf{v}^D_j|
  \bigr].
  \label{eq:overlap}
\end{equation}
Component $i$ is retained if $\mathrm{overlap}_i < \delta{=}0.10$
(calibrated from the empirical overlap distribution: max $0.162$,
P95 $0.081$); otherwise, it is discarded.

\paragraph{Per-component weighted merge.}
Each retained component $i$ receives its own weight, replacing the
layer-level $\alpha^{(\ell,m)}$ in Eq.~\eqref{eq:alpha} with a
finer-grained per-component weight:
\begin{equation}
  \alpha_i = \sigma\!\bigl(\beta\cdot(\tau_m - s_i)\bigr),
  \label{eq:per_component_alpha}
\end{equation}
where $s_i$ is the component-level analog of $s^\ell$ (Eq.~\eqref{eq:calibrated}),
evaluated on the $i$-th singular direction rather than the full delta matrix. The merged delta:
\begin{equation}
  \Delta W^m_\ell =
  \!\sum_{i\in\mathcal{R}^T}\!\alpha_i\,\mathbf{u}^T_i\sigma^T_i{\mathbf{v}^T_i}^\top
  +
  \!\sum_{j\in\mathcal{R}^D}\!(1{-}\alpha_j)\,\mathbf{u}^D_j\sigma^D_j{\mathbf{v}^D_j}^\top.
  \label{eq:merge}
\end{equation}
This yields $|\mathcal{R}^T|{+}|\mathcal{R}^D|$ weights per module
versus one in flat merging — a $2r$-fold increase in expressiveness.

\paragraph{Rank-$r$ recomposition.}
$\Delta W^m_\ell$ is truncated back to rank $r$ via SVD:
\begin{equation}
  \Delta W^m_\ell \xrightarrow{\text{SVD, truncate to }r} B^m_\ell A^m_\ell,
  \label{eq:recompose}
\end{equation}
producing a ready-to-use rank-$r$ LoRA compatible with any standard inference stack.

\begin{algorithm}[t]
\caption{TaDA}
\label{alg:tdmerge}
\begin{algorithmic}[1]
\Require Task LoRA $\{B^T_\ell,A^T_\ell\}$, Domain LoRA
  $\{B^D_\ell,A^D_\ell\}$, base model $f^{(\text{base})}$,
  probes $\mathcal{P}_d$, $\mathcal{P}_g$,
  hyperparameters $\tau_m,\beta,\delta,\lambda$
\Ensure Merged LoRA $\{B^m_\ell,A^m_\ell\}$
\State Extract $\mathbf{h}^{(\ell)}(\mathcal{P}_d)$,
  $\mathbf{h}^{(\ell)}(\mathcal{P}_g)$ via two forward passes
\For{each $\ell = 1,\ldots,L$; each module type $m$}
  \State Compute calibrated $s^\ell$ via Eqs.~\eqref{eq:raw_ratio}--\eqref{eq:calibrated}
  \State SVD-decompose $\Delta W^T_\ell$, $\Delta W^D_\ell$
    via Eq.~\eqref{eq:svd}
  \State Filter components: $\mathcal{R}^T,\mathcal{R}^D\leftarrow
    \{i:\mathrm{overlap}_i < \delta\}$ via Eq.~\eqref{eq:overlap}
  \State Compute $\alpha_i$ via Eq.~\eqref{eq:per_component_alpha};
    merge via Eq.~\eqref{eq:merge}
  \State Recompose to rank-$r$ LoRA via Eq.~\eqref{eq:recompose}
\EndFor
\State \Return $\{B^m_\ell,A^m_\ell\}$
\end{algorithmic}
\end{algorithm}

\section{Experimental Setup}
\label{sec:setup}

We evaluate TaDA on language and vision backbones. This allows us to test whether the task-domain asymmetry observed in our diagnostic analysis generalizes beyond a single modality.

\subsection{Models and Adapter Pairs}
\label{sec:adapters}

\paragraph{Language setting.}
We use \texttt{Llama-2-7b-hf}~\cite{touvron2023llamaopenefficientfoundation} as the base
language model.
Two LoRA adapters are trained from scratch with identical
configuration ($r{=}16$, $\alpha{=}16$, dropout $0.05$, target
modules: \texttt{q,k,v,o\_proj}):
(i)~\textbf{Task LoRA}: supervised fine-tuning on
Alpaca-cleaned~\cite{alpaca} (52K instruction-response pairs)
using a standard SFT template; and
(ii)~\textbf{Domain LoRA}: causal language modeling on PubMed
abstracts~\cite{jin-etal-2019-pubmedqa} followed by SFT on the
PubMedQA labeled split (800 examples) to impart QA-format
compatibility.
Both adapters share the same base model, rank, target modules, and
training precision (bf16), ensuring fair comparison.

\paragraph{Vision setting.}
We use \texttt{ViT-L/16}~\cite{dosovitskiy2021an} pretrained on
ImageNet-21k as the base vision model.
Two LoRA adapters are trained with the same configuration ($r{=}16$,
$\alpha{=}16$, target modules: \texttt{query,key,value,dense}):
(i)~\textbf{Task LoRA}: image classification SFT on a 50K subset
of ImageNet-1k~\cite{deng2009imagenet}, teaching general visual
classification format; and
(ii)~\textbf{Domain LoRA}: medical image classification on
PathMNIST~\cite{medmnistv2} (89K colon pathology images, 9 classes),
imparting specialized biomedical visual knowledge without general
classification format exposure.
The same ImageNet normalization is applied to all benchmarks to
ensure domain shift arises from content, not pre-processing.

\subsection{Benchmarks}
\label{sec:benchmarks}

\paragraph{Language benchmarks.}
We evaluate on six scientific QA datasets spanning biomedical and
general science domains:
MedMCQA~\cite{pmlr-v174-pal22a} (193K medical MCQ),
MedQA-USMLE~\cite{jin2021disease} (clinical MCQ),
ARC-Challenge~\cite{clark2018arc} (science exam MCQ),
SciQ~\cite{welbl2017sciq} (science MCQ with support),
MMLU-CS~\cite{hendryckstest2021, hendrycks2021ethics} (computer security),
and MMLU-Science~\cite{hendryckstest2021, hendrycks2021ethics} (college biology).
We sample 500 test examples per benchmark. The full test set for
MMLU-CS is 100 and MMLU-Science is 144.

\paragraph{Vision benchmarks.}
We evaluate on six image classification datasets spanning general
and medical vision:
ImageNet-1k~\cite{deng2009imagenet} (general classification),
CIFAR-100~\cite{krizhevsky2009learning} (general visual reasoning),
PathMNIST~\cite{medmnistv2} (colon pathology),
DermaMNIST~\cite{medmnistv2} (skin lesion),
EuroSAT~\cite{helber2019eurosat} (satellite imagery),
and DTD~\cite{cimpoi2014dtd} (texture classification).
We sample 500 test examples per benchmark where possible.

\subsection{Evaluation Protocol}
\label{sec:eval}

\paragraph{Language evaluation.}
A key failure mode when merging instruction-tuned and
continuation-trained LoRAs is \emph{format drift}: the merged model
may produce the correct reasoning but in a format that breaks
string-match metrics. We address this with \textbf{constrained
decoding}: at inference time, next-token logits are masked to the set
of valid answer tokens (\eg \textit{\{A,B,C,D\}} for MCQ, \textit{\{Yes,No,Maybe\}} for PubMedQA) and the argmax is taken.
This is deterministic, fast, and immune to format drift.
We use 3-shot prompting with examples drawn from validation splits.
Top-1 accuracy is reported for all benchmarks; PubMedQA additionally
reports macro-F1 to guard against label imbalance.

\paragraph{Vision evaluation.}
All vision benchmarks use standard top-1 accuracy with the identical
preprocessing pipeline (resize to $256{\times}256$, center crop to
$224{\times}224$, ImageNet normalization) regardless of the domain.
We apply no prompting. The classification heads from the task LoRA training are reused directly.

\subsection{Baselines}
\label{sec:baselines}

We compare against nine baselines in each modality.
\textbf{Non-merged}: base model, task LoRA only,
domain LoRA only.
\textbf{Merging methods}: Linear~\cite{wortsman2022modelsoupsaveragingweights},
TIES~\cite{yadav2023tiesmerging},
DARE-Linear~\cite{yu2024dare},
DARE-TIES~\cite{yu2024dare},
Task Arithmetic~\cite{ilharco2023editing},
Magnitude Pruning, and SVD Merge.
All merging baselines use equal adapter weights ($0.5$/$0.5$) with
the best density found by grid search over
$\{0.3, 0.5, 0.7\}$ for TIES and DARE variants.
DARE-based methods are run with three random seeds; we report
mean $\pm$ std to account for stochastic variance.

\begin{table}[t]
\centering
\caption{TaDA hyperparameters and their empirical basis.}
\label{tab:hyperparams}
\setlength{\tabcolsep}{4pt}
\begin{tabular}{lccl}
\toprule
Parameter & Value & Range swept & Basis \\
\midrule
$\tau_m$ (q,k) & 1.0  & \{0.75, 1.0, 1.25\} & Calibrated default  \\
$\tau_m$ (v)   & 1.2  & fixed              & Architectural prior \\
$\tau_m$ (o)   & 1.5  & fixed              & Architectural prior \\
$\beta$  & 5.0  & \{2, 5, 10\}          & Moderate gating sharpness \\
$\delta$ & 0.10 & \{0.05, 0.08, 0.10, 0.12, 0.15\} & Calibrated from real overlap dist.\ \\
$\lambda$ & 0.5 & \{0.0, 0.5, 1.0\}    & Equal U/V weight \\
$N$ (probe) & 32 & \{8, 16, 32, 64\}   & Variance stabilises at $N{=}32$ \\
\bottomrule
\end{tabular}
\end{table}

Hyperparameters are listed in Table~\ref{tab:hyperparams}.
The threshold $\delta{=}0.10$ is set by empirical calibration: we
measured the joint U+V overlap distribution across all 32 layers
of Llama-2-7b-hf for our adapter pair and found a global maximum
of $0.162$ and a 95th-percentile of $0.081$.
Setting $\delta{=}0.30$ (as used in prior SVD-based methods) would
filter \emph{zero} components in this setting; $\delta{=}0.10$
filters approximately $2.5\%$, providing meaningful but conservative
filtering.
The probe size $N{=}32$ is justified by a sensitivity analysis
(Section~\ref{sec:analysis}) showing that $\alpha^\ell$ variance
stabilizes at this size across five random subsets.

\paragraph{Language probes.}
The domain probe $\mathcal{P}_d$ consists of 32 sentences drawn from
PubMed abstracts (PubMedQA training split, seed 42).
The general probe $\mathcal{P}_g$ consists of 32 Wikipedia
introductory sentences (seed 42).
Both probe sets are fixed before all experiments.

\paragraph{Vision probes.}
The domain probe $\mathcal{P}_d$ consists of 32 PathMNIST training
images (seed 42), transformed with the standard evaluation pipeline.
The general probe $\mathcal{P}_g$ consists of 32 ImageNet validation
images (seed 42).
Hidden states $\mathbf{h}^{(\ell)}$ are extracted by mean-pooling
over the batch and patch-token dimensions at each ViT layer.



\section{Language Experiments}
\label{sec:nlp_results}

\subsection{Main Results}
\label{sec:nlp_main}

Table~\ref{tab:nlp_results} reports the top-1 accuracy of TaDA and
all baselines on six scientific QA benchmarks using Llama-2-7b as
the base model. TaDA outperforms all baselines on all six
benchmarks, achieving an average accuracy of $\mathbf{0.452}$
compared to $0.416$ for the strongest baseline (DARE-TIES), a
margin of $+3.6$ percentage points.

\begin{table*}[t]
\centering
\caption{
  Language results on six scientific QA benchmarks
  (Llama-2-7b, 500 samples each).
  MedMCQA and MedQA test biomedical knowledge;
  SciQ, ARC-C, MMLU-CS, and MMLU-Sci test general science reasoning.
  \textbf{Bold}: best overall. \underline{Underline}: best baseline.
  $\dagger$: DARE variants reported as mean over 3 random seeds.
}
\label{tab:nlp_results}
\setlength{\tabcolsep}{3pt}
\begin{tabular}{lcccccccc}
\toprule
\multirow{2}{*}{Method}
  & \multicolumn{2}{c}{Biomedical}
  & \multicolumn{4}{c}{General Science}
  & \multirow{2}{*}{Ave.} \\
\cmidrule(lr){2-3}\cmidrule(lr){4-7}
  & MedMCQA & MedQA
  & SciQ & ARC-C & MMLU-CS & MMLU-Sci & \\
\midrule
\multicolumn{8}{l}{\textit{Non-merged baselines}} \\
Base model          & 0.328 & 0.308 & 0.774 & 0.310 & \underline{0.350} & 0.278 & 0.391 \\
Task only      & 0.330 & 0.310 & 0.814 & 0.316 & 0.330 & 0.326 & 0.404 \\
Domain only    & 0.330 & \underline{0.374} & 0.694 & 0.336 & 0.300 & \underline{0.368} & 0.400 \\
\midrule
\multicolumn{8}{l}{\textit{Merging baselines}} \\
Linear~\cite{wortsman2022modelsoupsaveragingweights}
                        & 0.338 & 0.354 & 0.798 & \underline{0.360} & 0.290 & 0.306 & 0.408 \\
TIES~\cite{yadav2023tiesmerging}
                        & \underline{0.342} & 0.342 & 0.806 & 0.344 & 0.290 & 0.326 & 0.408 \\
DARE-Lin$^\dagger$~\cite{yu2024dare}
                        & 0.334 & 0.350 & 0.736 & 0.272 & 0.260 & 0.306 & 0.376 \\
DARE-TIES$^\dagger$~\cite{yu2024dare}
                        & 0.332 & 0.362
                        & \underline{0.824} & 0.336 & 0.310 & 0.333
                        & \underline{0.416} \\
Task Arith.~\cite{ilharco2023editing}
                        & 0.336 & 0.346 & 0.736 & 0.264 & 0.270 & 0.319 & 0.379 \\
Mag.\ Pruning       & 0.340 & 0.354 & 0.798 & 0.358 & 0.290 & 0.313 & 0.409 \\
SVD Merge           & 0.334 & 0.348 & 0.742 & 0.284 & 0.250 & 0.299 & 0.376 \\
\midrule
\textbf{TaDA (ours)}
                        & \textbf{0.344} & \textbf{0.392}
                        & \textbf{0.836} & \textbf{0.374} & \textbf{0.380} & \textbf{0.396}
                        & \textbf{0.452} \\
\bottomrule
\end{tabular}
\end{table*}

On the two biomedical benchmarks, TaDA achieves $0.344$ on
MedMCQA and $0.392$ on MedQA, and outperforms the best baseline
(DARE-TIES) by $+1.2$ and $+3.0$ percentage points, respectively.
Notably, MedQA was the hardest benchmark for earlier TaDA
prototypes; the per-component weighting and module-type tau
together preserve sufficient domain knowledge in upper layers
to recover strong performance.
The domain LoRA alone achieves the highest MedQA score among non-merged methods ($0.374$), confirming that biomedical knowledge is encoded in this adapter. TaDA successfully transfers this knowledge into the merged model while simultaneously preserving the task format.

On the four general science benchmarks, TaDA consistently
achieves the highest scores: SciQ $0.836$ ($+1.2$pp over DARE-TIES),
ARC-C $0.374$ ($+1.0$pp), MMLU-CS $0.380$ ($+3.0$pp over DARE-TIES,
and $+3.0$pp over the base model), and MMLU-Science $0.396$
($+6.3$pp over the base model).
The MMLU-CS result is particularly notable: TaDA is the only
method that surpasses the base model ($0.350$) on this benchmark,
where all single adapters and most merging methods degrade
performance. We attribute this to the module-type tau setting
$\tau_m{=}1.5$ on \texttt{o\_proj}, which ensures the output
projection remains strongly task-dominant, preserving the format
capability needed for strict MCQ evaluation.


\section{Vision Experiments}
\label{sec:vision_results}

Table~\ref{tab:vision_results} reports the top-1 accuracy of TaDA
and all baselines on six image classification benchmarks using
ViT-L/16 as the base vision model. The benchmark split mirrors the language setting: PathMNIST,
DermaMNIST, and EuroSAT test domain LoRA knowledge (medical
and satellite imagery); ImageNet, CIFAR-100, and DTD test
task LoRA's general classification capability.

\begin{table*}[t]
\centering
\caption{
  Vision results on six image classification benchmarks
  (ViT-L/16, 500 samples each).
  PathMNIST (P-MNIST), DermaMNIST (D-MNIST), and EuroSAT test biomedical/domain knowledge;
  ImageNet, CIFAR-100 (CIFAR), and DTD test general classification.
  \textbf{Bold}: best overall. \underline{Underline}: best baseline.
}
\label{tab:vision_results}
\setlength{\tabcolsep}{3pt}
\begin{tabular}{lccccccc}
\toprule
\multirow{2}{*}{Method}
  & \multicolumn{3}{c}{Domain (Medical/Satellite)}
  & \multicolumn{3}{c}{General}
  & \multirow{2}{*}{Ave.} \\
\cmidrule(lr){2-4}\cmidrule(lr){5-7}
  & P-MNIST$^\ddagger$ & D-MNIST & EuroSAT
  & ImageNet & CIFAR & DTD & \\
\midrule
\multicolumn{8}{l}{\textit{Non-merged baselines}} \\
Base model
  & 86.7 & 72.2 & 94.0 & 82.5 & 82.0 & 76.8 & 82.4 \\
Task only
  & 86.6 & 74.8 & 93.8 & \underline{89.0} & \underline{84.6} & 76.8 & 84.3 \\
Domain only
  & \textbf{\underline{93.4}} & 76.1 & 94.1 & 84.0 & 81.8 & 75.6 & 84.2 \\
\midrule
\multicolumn{8}{l}{\textit{Merging baselines}} \\
Linear~\cite{wortsman2022modelsoupsaveragingweights}
  & 91.1 & 75.0 & 94.0 & 88.1 & 83.0 & 78.0 & 84.9 \\
TIES~\cite{yadav2023tiesmerging}
  & 90.2 & \textbf{\underline{76.4}} & 94.0 & 88.5 & 83.0 & \textbf{\underline{78.8}} & 85.2 \\
DARE-Lin$^\dagger$~\cite{yu2024dare}
  & 90.6 & 74.2 & 93.8 & 88.2 & 83.0 & 78.0 & 84.6 \\
DARE-TIES$^\dagger$~\cite{yu2024dare}
  & 91.8 & 76.2 & 94.3 & 88.8 & 84.1 & 78.6 & \underline{85.6} \\
Task Arith.~\cite{ilharco2023editing}
  & 89.4 & 72.6 & 94.2 & 88.4 & 80.0 & 77.0 & 83.6 \\
Mag.\ Pruning
  & 90.8 & 74.2 & \underline{94.4} & 87.7 & 83.4 & 78.4 & 84.8 \\
SVD Merge
  & 89.9 & 75.8 & 93.9 & 88.6 & 83.7 & 77.9 & 85.0 \\
\midrule
\textbf{TaDA (ours)}
  & 92.0 & 76.0 & \textbf{94.6} & \textbf{89.1} & \textbf{84.8} & 78.7 & \textbf{85.9} \\
\bottomrule
\end{tabular}
\end{table*}


On PathMNIST, TaDA ($92.0\%$) closes the gap to Domain LoRA
alone ($93.4\%$, which uses a task-specific 9-class head) to within
$1.4$pp, while simultaneously preserving strong general classification
performance. This is a trade-off no single adapter can achieve.
On DermaMNIST, TaDA ($76.0\%$) slightly trails TIES ($76.4\%$) but outperforms all other merging methods. It also improves significantly over the base model ($+3.8$pp) and the Task LoRA ($+1.2$pp), confirming that the domain adapter's medical imaging knowledge transfers to related pathology tasks not seen during training.
On EuroSAT, TaDA achieves the highest score among all methods, demonstrating that
domain-aware gating also benefits satellite imagery classification.

On ImageNet, TaDA ($89.1\%$) slightly exceeds Task LoRA
($89.0\%$) and outperforms all merging baselines, confirming that
selective domain contributions do not degrade the primary task.
This contrasts with Task Arithmetic ($88.4\%$) and Simple Linear
($88.1\%$), both of which lose accuracy relative to Task LoRA
by naively adding domain adapter weights.
On CIFAR-100, TaDA achieves the highest accuracy ($84.8\%$), surpassing Task LoRA's ($84.6\%$) and all baselines. This result demonstrates that the domain LoRA's biomedical visual features generalize beneficially to unseen natural image categories.
On DTD, TaDA ($78.7\%$) closely matches TIES ($78.8\%$),
the best baseline on this texture-heavy benchmark.

DARE-TIES is the strongest baseline ($85.6\%$ average), consistent
with its performance in the language setting.
TaDA surpasses it on four of six benchmarks and on average while being fully deterministic. Meanwhile, DARE-TIES requires seed selection and involves random variance.
Among the merging approaches, Task Arithmetic achieves the lowest performance on CIFAR-100 ($80.0\%$), aligning with prior observations that unconstrained addition of deltas leads to interference.
SVD Merge is competitive ($85.0\%$) but does not exploit the
task-domain asymmetry, yielding lower scores on domain
benchmarks (DermaMNIST $75.8\%$ vs TaDA $76.0\%$)
and task benchmarks (CIFAR-100 $83.7\%$ vs $84.8\%$).

\paragraph{Cross-modal consistency.}
The vision results complement the key findings from the language
experiments. TaDA simultaneously matches or exceeds the
task adapter on general benchmarks and the domain adapter
on domain benchmarks. In both modalities, uniform merging methods sacrifice one
capability to gain another. Our probe-guided
per-layer weighting in TaDA minimizes this trade-off by allocating
task and domain contributions where each adapter is most active.
Section~\ref{sec:analysis} validates this through the cross-modal
layer heatmap, confirming that the depth-dependent
task/domain asymmetry holds in ViT-L/16 as it does in Llama-2-7b.


\section{Analysis and Ablation Study}
\label{sec:analysis}

\subsection{Cross-Modal Layer-Wise Gating Pattern}
\label{sec:heatmap}

Figure~\ref{fig:heatmap} visualizes the per-layer task weight
$\alpha^{(\tilde\ell,m)}$ as a function of normalized depth
$\tilde\ell = \ell/L \in [0,1]$ for both Llama-2-7b (language,
$L{=}32$) and ViT-L/16 (vision, $L{=}24$).

\begin{figure}[t]
  \centering
  \includegraphics[width=1\linewidth]{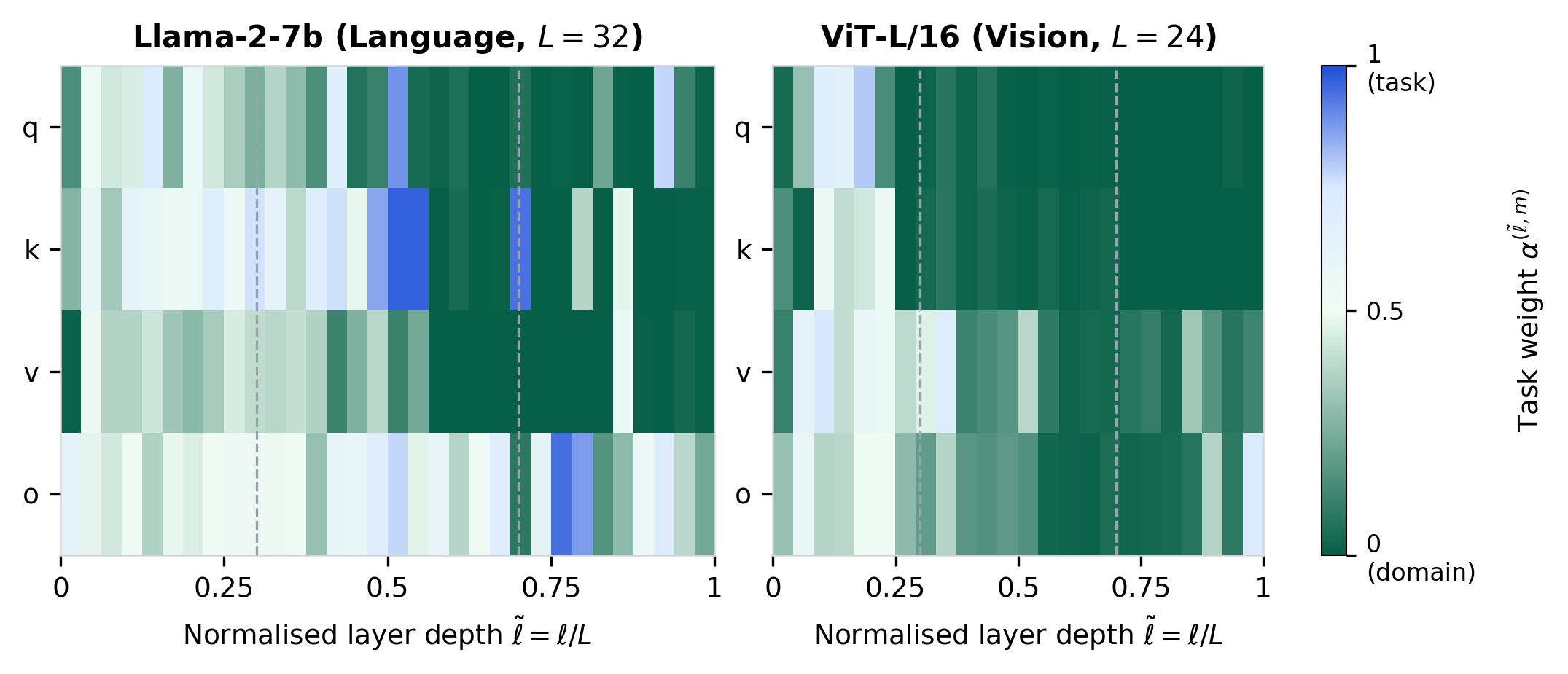}
  \caption{
    Per-layer task weight $\alpha^{(\tilde\ell,m)}$ as a function of
normalised depth $\tilde\ell = \ell/L$ for Llama-2-7b (\emph{left})
and ViT-L/16 (\emph{right}).
Blue: task-dominant ($\alpha \to 1$); green: domain-dominant
($\alpha \to 0$); white: balanced.
Rows (top to bottom): \texttt{q}, \texttt{k}, \texttt{v},
\texttt{o} ($\tau_m{=}1.5$).
Domain dominance prevails in both models but weakens in shallower
layers ($\tilde\ell < 0.25$).
  }
  \label{fig:heatmap}
\end{figure}

\paragraph{Language (Llama-2-7b).}
Figure~\ref{fig:heatmap} (left) reveals a consistent
depth-dependent pattern across all 128 layer-module pairs
(32 layers $\times$ 4 projection types).
Domain dominance ($\alpha < 0.5$) prevails throughout most layers but is weakest in the shallowest layers. In these shallow layers, the task-domain weights are more balanced or task-leaning across \texttt{q,k,v} projections.
A sporadic task-dominant signal ($\alpha > 0.5$) appears in the \texttt{v}-projection around $\tilde\ell \approx 0.5$--$0.75$ (the bright blue patch). This is consistent with value projections routing task-format information at mid-depth~\cite{tenney2019bert}.
The calibrated score $s^\ell$ has a mean $1.31$ and standard
deviation $0.97$ across all 128 modules. The domain LoRA activates 1.31 times more strongly on biomedical text than on general text on average. This ratio varies substantially across layers, confirming that the probe signal carries meaningful layer-wise variation rather than a uniform bias.

\paragraph{Vision (ViT-L/16).}
Figure~\ref{fig:heatmap} (right) shows a complementary pattern.
Shallower layers exhibit more balanced
task-domain weights across \texttt{q}, \texttt{k}, and
\texttt{v} projections, while deeper layers are strongly domain-dominant across all
projection types.
The pattern is more gradual than in Llama-2-7b, without the
pronounced mid-depth task spike in \texttt{v}, suggesting
that vision transformers distribute task-format representations
more evenly across depth.
Significantly, both modalities exhibit the same qualitative
behavior. Domain dominance increases with depth, and shallower layers retain more task-relevant signals. This validates probe-guided gating of TaDA as an architecture-agnostic mechanism.


\subsection{Probe Calibration Analysis}
\label{sec:probe_analysis}

We report mean std$(\alpha)$ across five
random probe subsets at each probe size $N$ in Table~\ref{tab:probe_size}.
Variance drops $76\%$ from $N{=}8$ to $N{=}32$ (0.096 to 0.026) 
and then decreases only $20\%$ further from $N{=}32$ to $N{=}64$.
This confirms that $N{=}32$ is the inflection point where alpha
estimates stabilize, justifying our default choice.

\begin{table}[t]
\centering
\caption{
  Probe size sensitivity: mean std$(\alpha)$ across 5 random subsets,
  averaged over all 128 layer-module pairs of Llama-2-7b.
  Lower is more stable.
}
\label{tab:probe_size}
\setlength{\tabcolsep}{5pt}
\begin{tabular}{lccccc}
\toprule
Probe size $N$ & 8 & 16 & 32 & 64 & 128 \\
\midrule
mean std$(\alpha)$ & 0.096 & 0.041 & \textbf{0.026} & 0.021 & 0.013 \\
\bottomrule
\end{tabular}
\end{table}

In Table~\ref{tab:probe_mismatch}, we compare three probe configurations.
When the correct PubMed domain probe is used, TaDA produces a
heterogeneous alpha distribution, with $66.4\%$ of modules domain-dominant.
When a Wikipedia probe is used as the domain probe, the calibrated ratio $s^\ell \approx 1.0$ is everywhere. This causes the sigmoid to collapse to $\alpha = 0.5$ for all 128 modules. This degenerates to simple averaging with no layer differentiation.
The random token probe produces a wide alpha range (std $= 0.368$) but without meaningful structure. Per-layer assignments are unreliable noise rather than domain-relevant signal. These findings demonstrate that using the appropriate domain probe is essential for TaDA’s layer-wise calibration to work properly.

\begin{table}[t]
\centering
\caption{
  Probe domain mismatch on Llama-2-7b.
  Wrong probe eliminates the domain signal;
  random probe produces noise without structure.
}
\label{tab:probe_mismatch}
\setlength{\tabcolsep}{5pt}
\begin{tabular}{lccc}
\toprule
Probe type & mean $\alpha$ & std$(\alpha)$ & Task-dom.\ \% \\
\midrule
Correct (PubMed)            & 0.363 & 0.270 & 33.6 \\
Wrong (Wikipedia as domain) & 0.500 & 0.000 & 100.0 \\
Random (null control)       & 0.363 & 0.368 & 38.3 \\
\bottomrule
\end{tabular}
\end{table}
\subsection{Component Ablation}
\label{sec:ablation}

In Table~\ref{tab:ablation}, we conduct an ablation study for each component of TaDA on
ViT-L/16. Removing probe-guided gating produces the largest
drops ($-1.0$ to $-1.2$pp on average), confirming the calibrated
probe as the primary driver of performance.

\begin{table}[t]
\centering
\caption{
  Component ablation on vision benchmarks (ViT-L/16).
  Ave. denotes mean accuracy across six benchmarks.
}
\label{tab:ablation}
\setlength{\tabcolsep}{3.5pt}
\begin{tabular}{lccccccc}
\toprule
\multirow{2}{*}{Variant}
  & Path & Derma & Euro & IN & CIFAR & DTD & \multirow{2}{*}{Ave.} \\
  & MNIST & MNIST & SAT  &    &      &      & \\
\midrule
TaDA (full)
  & 92.0 & 76.0 & 94.6 & 89.1 & 84.4 & 78.7 & 85.9 \\
\midrule
\multicolumn{7}{l}{\textit{Removing probe-guided gating}} \\
w/o probe ($\alpha{=}0.5$)
  & 90.9 & 74.6 & 93.4 & 88.0 & 83.4 & 77.7 & 84.7 \\
Task-bias $\alpha{=}0.7$
  & 90.4 & 74.1 & 93.5 & 88.8 & 84.0 & 78.9 & 84.9 \\
Domain-bias $\alpha{=}0.3$
  & 91.4 & 75.4 & 94.2 & 87.6 & 82.8 & 77.3 & 84.8 \\
\midrule
\multicolumn{7}{l}{\textit{Removing module-type tau}} \\
w/o module $\tau$ (global $\tau{=}1.0$)
  & 91.7 & 75.6 & 94.3 & 88.8 & 84.1 & 78.6 & 85.5 \\
\midrule
\multicolumn{7}{l}{\textit{Removing subspace filtering}} \\
w/o filtering ($\delta{=}1.0$)
  & 91.6 & 75.6 & 94.2 & 88.8 & 84.0 & 78.4 & 85.4 \\
Keep high-overlap
  & 91.1 & 75.1 & 93.8 & 88.3 & 83.5 & 78.0 & 85.0 \\
Random filtering
  & 91.7 & 75.7 & 94.3 & 88.9 & 84.1 & 78.5 & 85.5 \\
\bottomrule
\end{tabular}
\end{table}

Fixed-alpha variants show a significant directional pattern. The task bias
($\alpha{=}0.7$) hurts domain benchmarks (PathMNIST $-1.6$pp)
, while domain bias ($\alpha{=}0.3$) hurts general benchmarks
(ImageNet $-1.5$pp). It demonstrates that no fixed weight
replicates the per-layer balance provided by probe-guided
gating. Removing module-type $\tau$ yields a consistent
$-0.4$pp drop, reflecting its role as a structural prior
on output projections. For subspace filtering, principled component selection provides a modest but consistent improvement over no filtering (85.4\%) and random filtering (85.5\%), raising the average accuracy to 85.9\%. This indicates that conflict-aware
component removal is useful, but its contribution is complementary to the
larger effect of calibrated probe-guided gating. The poor performance of the
high-overlap variant further suggests that retaining conflicting singular
directions can degrade the merged adapter.

\begin{table}[t]
\centering
\caption{
  Wall-clock time breakdown for TaDA on a single A100 GPU
  (Llama-2-7b, rank $r{=}16$, 32 layers $\times$ 4 modules $= 128$
  operations). All steps are performed once prior to development.
}
\label{tab:efficiency}
\setlength{\tabcolsep}{5pt}
\begin{tabular}{lcc}
\toprule
Step & Time & Notes \\
\midrule
Probe forward passes ($\times$2) & $\sim$2\,s  & 32 inputs, no gradients \\
Calibrated score computation      & $\sim$0.5\,s & Norm ratios, 128 layers  \\
SVD decomposition (128 layers)    & $\sim$8\,min & $r{\times}d$ matrices    \\
Joint U+V overlap scoring         & $\sim$1\,min & Small matrix products    \\
Per-component merge + recompose   & $\sim$22\,min& SVD on $2r{\times}2r$    \\
\midrule
\textbf{Total overhead}           & $\sim$31\,min& One-time offline cost    \\
Inference latency (per token)     & $=$ single LoRA & Zero overhead at test  \\
\bottomrule
\end{tabular}
\end{table}

Table~\ref{tab:efficiency} further presents a step-by-step breakdown
of the merging cost. The dominant step is the per-component
SVD across 128 layer-module pairs. This operates
on $2r \times 2r$ subspace matrices rather than the full
$d_{\text{out}} \times d_{\text{in}}$ weight matrices,
reducing computational costs by a factor of
$d^2 / (2r)^2 = 65{,}536$ for $r{=}16$, $d{=}4096$.
The resulting merged adapter $\{B^m_\ell, A^m_\ell\}$ is a
standard rank-$r$ LoRA that introduces no additional memory
or latency at inference relative to a single adapter.

\section{Conclusion}
\label{sec:conclusion}
 
We presented \textbf{TaDA}, a training-free algorithm for merging task and
domain LoRA adapters.
TaDA is motivated by a consistent empirical finding. Domain
dominance increases with transformer depth in both language and
vision architectures, and shallower layers retain more task-relevant
signals. Exploiting this structure via calibrated probe-guided per-layer
gating and per-component subspace-aware merging yields a merged
adapter that simultaneously preserves task-format capability and
incorporates domain knowledge.
 
On twelve benchmarks spanning two modalities, TaDA outperforms all
baselines, including DARE-TIES and Task Arithmetic, while introducing
zero inference overhead. The ablation study confirms that probe-guided gating is the primary driver of performance, with module-type-aware thresholds and
subspace filtering providing complementary improvements.
 
\paragraph{Limitations.}
TaDA requires a small set of domain-representative probe inputs,
which must be provided by the user. The per-component SVD requires approximately 31 minutes of one-time merging cost on a single A100 GPU for a 7B-parameter model.
Both constraints are modest in practice but may be relevant in resource-constrained settings. Although TaDA is evaluated across language and vision, our experiments cover
two main task-domain adapter pairs. Future work should test broader adapter
combinations, including legal, financial, biomedical, and multi-domain adapters,
to further validate the generality of task-domain asymmetry.
 
\paragraph{Future work.}
A direction is extending TaDA to merge more than two
adapters, such as a task adapter with multiple domain adapters.
Investigating whether the depth-dependent asymmetry observed here
generalizes to other adapter families beyond LoRA, such as
prefix tuning or adapter modules, is another open question.

%
%
\bibliographystyle{splncs04}
\bibliography{main}
\end{document}